\documentclass{article}
\usepackage{microtype}
\usepackage{graphicx}
\usepackage{subfigure}
\usepackage{booktabs}
\usepackage{xurl}
\usepackage{hyperref}

\usepackage[accepted]{icml2025}
\usepackage{amsmath,amssymb,amsthm}
\usepackage{algorithm,algorithmic}
\usepackage{listings}
\usepackage{textcomp}
\usepackage[T1,OT1]{fontenc}
\usepackage{xcolor}
\makeatletter
\renewcommand{\Notice@String}{Research preprint, October 2026. Not an ICML proceedings publication. Copyright 2026 by the authors.}
\makeatother
\hypersetup{pdfsubject={Research preprint on passive imitation and verified skill acquisition}}
\icmltitlerunning{Passive Imitation: Localized Repair and Verified Skill Acquisition}
\newtheorem{proposition}{Proposition}
\newtheorem{assumption}{Assumption}
\newcommand{\PI}{\textnormal{\textsc{PI}}}

\begin{document}
\twocolumn[
\icmltitle{Imitation Holds a Key to Evolution:\texorpdfstring{\\}{ }Localized Repair and Verified Skill Acquisition for Adaptive Agents}
\begin{icmlauthorlist}
\icmlauthor{Kai Hu}{bisu}
\icmlauthor{Hua Huang}{cqupt}
\end{icmlauthorlist}
\icmlaffiliation{bisu}{Beijing International Studies University}
\icmlaffiliation{cqupt}{Chongqing University of Posts and Telecommunications}
\icmlcorrespondingauthor{Kai Hu}{2025110321@stu.bisu.edu.cn}
\icmlkeywords{passive imitation, agent architecture, program repair, skill acquisition, continual adaptation}
\vskip 0.3in
]
\printAffiliationsAndNotice{Author email: Hua Huang \textless{}S251101018@std.cqupt.edu.cn\textgreater{}. }

\begin{abstract}
An agent can retain successful trajectories without learning which part of an experience should repair a new discrepancy.
We formulate passive imitation (\PI) as an architectural loop that localizes a violated contract, extracts a local goal, proposes an effect-matched skill, tests a small patch, and retains evidence after full-task verification.
We instantiate this loop as a deterministic symbolic controller and analyze its finite repair-query complexity under explicit identifiability and noninterference assumptions.
Across ten simulated tool worlds, \PI\ solves all 600 cold held-out composition tasks with three allowed small trials, compared with 6.17\% for a whole-trajectory control, while an equally expressive compositional repair control matches \PI.
On an unseen tool revision, \PI\ achieves 70.0\% success versus 24.0\% without small trials and 79.0\% with self-trial enumeration; an exploratory 24-task adaptation stream subsequently raises first-submission success to 100\%, supporting a limited correction-and-retention mechanism rather than a unique incentive advantage or general intelligence.
The CPU-only implementation, integrity checks, and reproducible results are available at \url{https://github.com/FriendOfYishan/PI}.
\end{abstract}

\section{Introduction}
An adaptive agent must decide what a remembered experience licenses it to change. A reporting workflow may select paid records, resolve duplicate revisions, and normalize monetary values. Copying it into a different request can repair one operation while damaging two others. A failed workflow may also contain useful components that support a smaller intervention.

We ask: \emph{can an agent convert a discrepancy into a locally verified skill that improves later decisions without updating a neural model?} This requires identifying the implicated computation, expressing its desired effect, testing a candidate transformation, and assessing whether the evidence supports reuse. Current-task success and reusable skill acquisition are separate events.

We call this organization \emph{passive imitation}. ``Passive'' refers to learning opportunities arising during an assigned task. Acquired skills should be used before a discrepancy recurs. Candidates can come from another execution, the agent's own history, or action enumeration. Imitation is an optional source of hypotheses; the reusable object is a transformation with a proposed effect and an applicability condition.

Existing agents combine reasoning, feedback, memory, and routines: ReAct interleaves reasoning and actions \citep{yao2023react}; Reflexion retains verbal feedback \citep{shinn2023reflexion}; ExpeL extracts experiential insights \citep{zhao2024expel}; and Agent Workflow Memory induces workflows \citep{wang2025awm}. Neither experience reuse nor localized correction is new in isolation. We make their proposed organization explicit and test its components in an inspectable setting.

Our small symbolic world removes language-model variance while exposing supplied assumptions: typed roles, an executable grammar, and structured assertions. Effect recognition and goal projection are programmed. We test the architecture with these interfaces available, without testing their discovery from unrestricted language.

Local patches outperform whole-trajectory reuse on withheld compositions, and verified records improve future first attempts. Small trials help when tool meanings change but are unnecessary for reliable familiar fragments. A strong compositional control ties \PI, while demonstration-free enumeration performs better on unseen semantics under a tight budget. These findings constrain the architecture's interpretation.

Our contributions are a controller separating retrieval, tentative repair, full verification, and reuse; a conditional query analysis explaining when promotion thresholds are redundant; and an exhaustively checked CPU study with matched budgets, an expressive control, ablations, transfer failures, and exploratory adaptation. Evolution here means cumulative change in a reusable repertoire, without a claim of biological evolution or artificial general intelligence.

\section{Related Work}
\paragraph{Feedback and agent memory.}
Reflexion uses linguistic feedback as an experience signal for subsequent attempts, while Self-Refine iterates generation, feedback, and revision without additional model training \citep{shinn2023reflexion,madaan2023selfrefine}. Self-Debugging uses execution and explanations to improve generated programs \citep{chen2024selfdebug}. These systems establish that nonparametric correction can be useful. Our controller separates the implicated scope and the desired effect before proposing a patch, and distinguishes local acceptance from evidence for future reuse. We study that separation in a finite interpreter, not as a replacement implementation of any of these LLM systems.

ExpeL stores extracted insights and past experiences for later decisions \citep{zhao2024expel}. Agent Workflow Memory induces and retrieves routines from previous executions \citep{wang2025awm}. Voyager combines executable skills, environment feedback, and iterative program improvement \citep{wang2024voyager}. EvolveR further couples distilled principles and empirical utility with policy evolution \citep{wu2026evolver}. Consequently, neither reusable experience nor an incentive-like utility score is itself a novelty claim. \PI\ is positioned as a transparent composition of mechanisms whose individual roles can be tested, with a particular emphasis on small interventions and the conditions attached to a retained action.

\paragraph{Explanation and repair.}
Explanation-based generalization connects domain knowledge and an explanation of a specific case to reusable knowledge \citep{mitchell1986ebg}. Our use of typed lineage and contract projection has a clear conceptual debt to that tradition. It is substantially simpler: the experiment does not infer a domain theory, prove a natural-language explanation, or discover a new predicate vocabulary. Delta debugging isolates failure-inducing input through tests \citep{zeller2002dd}. It motivates the importance of narrowing a discrepancy, but our localization is supplied through public data lineage, not computed by a delta-debugging search. Claims about autonomous fault localization would therefore go beyond the experiment.

\paragraph{Imitation and sequential learning.}
Imitation learning must account for the distribution of states induced by a learner's actions; DAgger addresses this through dataset aggregation and no-regret learning \citep{ross2011dagger}. \PI\ does not fit a policy to expert action labels and does not inherit that guarantee. Its evidence is a table of conditioned actions and trial outcomes, with no gradient updates. Temporally extended skills are also central to the options framework \citep{sutton1999options}; our atomic stage transformations are more restricted than general options. These connections encourage precise terminology: this work concerns architectural experience reuse, rather than a new universal imitation-learning objective.

\paragraph{Scope of the empirical comparison.}
All evaluated methods share one interpreter. Whole-trajectory replay is restricted to complete workflows and does not reproduce the broader capabilities of the cited LLM agents. The compositional repair control is the informative comparison for benefits attributable uniquely to \PI; its tie is reported explicitly.

\section{Problem Formulation}
\subsection{Tasks, Programs, and Feedback}
A task is $x=(v,r,\rho,D)$, where $v$ identifies a tool revision, $r$ is a structured request, $\rho$ binds physical fields to logical roles, and $D$ is an input table. A program is a tuple $p=(a_1,\ldots,a_m)$ with $a_j\in\mathcal A_j$. The environment executes
\begin{equation}
z_0=\rho(D),\qquad z_j=T_{v,j,a_j}(z_{j-1}),\quad j=1,\ldots,m.
\end{equation}
The action identifiers are public, but their revision-dependent implementations are not available to the controller as a lookup table. Execution returns an artifact, a before/after trace, and feedback. The public request specifies a local goal $g_j=r_j$. A stage contract $\phi_j(z_{j-1},z_j;g_j)$ states whether that transformation has the requested effect on its actual input.

Let $b_j\in\{0,1\}$ denote the corresponding contract result and let $F(b)=\sum_j(1-b_j)$ count failed contracts. A full audit additionally compares the final aggregate and supporting row identities with an independent reference computation. Its acceptance bit $A(x,p)$ requires both the artifact checks and every stage contract to pass. The controller receives no correct action identifier, hidden registry, or reference answer from the feedback channel. Nevertheless, the feedback is strong: structured diagnostics name a violated output path, and a public lineage map identifies its producer.

This distinction matters for interpreting observability. The controller does not observe latent tool semantics directly, but the benchmark is not a general partially observable decision process with a learned belief state. Its hidden uncertainty is a finite action-to-effect permutation. Exposing logical field roles and contracts removes important sources of difficulty that would occur in a natural document workflow.

\subsection{Experience and the Unit of Reuse}
A source demonstration contains a task, an accepted program, and its stagewise trace. From a transition $(z_{j-1},a_j,z_j)$, an effect recognizer infers $e_j=\psi_j(z_{j-1},z_j)$. A reusable fragment is
\begin{equation}
f=(v,j,e_j,a_j,\text{source identifier}).
\end{equation}
The recognizer uses observable set changes, timestamps, and numerical ratios. It never consults the environment's hidden permutation. In this implementation its effect vocabulary is predefined; abstraction means mapping concrete changes into that vocabulary, not inventing a new concept.

Persistent acquired evidence is indexed by a condition $c=(v,j,g_j)$ and an action $a$. A record $E(c,a)=(s,f,R,U)$ stores successful and failed trial evidence, cumulative incentive, and a set of distinct fully accepted task identifiers. This representation preserves an action's intended role and revision. It does not preserve enough information to certify behavior under arbitrary distribution shift. Source fragments and acquired evidence also have different status: an old fragment may propose a candidate in a new revision, whereas proactive reuse requires confirmations under the current revision key.

\subsection{Budget and Evaluation Objectives}
Each episode permits at most $K$ small trials and two full submissions. The first full submission exposes any discrepancy. After bounded correction, a second submission checks the complete task. A small trial executes a proposed program on a shared subset $\mu(D)$ comprising two complete order histories; the controller changes one stage in that program, rather than executing only the changed stage.

We report final acceptance, first-submission acceptance, the number of small trials, and row-stage visits. The latter counts rows processed by stage operators over full and small executions, plus the rows consumed by the final aggregate/emission step. It excludes retrieval, trace parsing, controller computation, and corpus construction. It is consequently an interpreter-work proxy, not a measurement of tokens, latency, energy, or monetary cost. No fixed long-term memory or context-window constraint is imposed in the present study. Keeping these quantities separate prevents an execution-saving result from being misrepresented as a language-model efficiency result.

\section{Passive Imitation Architecture}
\subsection{From a Discrepancy to a Local Candidate}
Figure~\ref{fig:architecture} illustrates the controller. For a failed diagnostic $d$, localization maps its public output path to a producing stage $j=G(d.\text{path})$. Goal projection then selects $g_j=r_j$. The textual explanation is retained for inspection, but its semantics are not parsed by a language model. The diagnostic path and structured request perform the operative reasoning in this pilot.

Candidate construction first considers actions with positive acquired evidence for $(v,j,g_j)$. It then selects source fragments with the same stage and inferred effect as the local goal, preferring the current revision, followed by a heuristic evidence score and a deterministic source-identifier tie break. All remaining typed actions are appended, with duplicates removed. This final enumeration step makes source demonstrations optional. The controller skips the current program and previously tried pairs of stage and full program signature within the episode.

The candidate transformation is transplanted into the current program only at stage $j$. It is not a complete trajectory replacement. That distinction is the main structural contrast with the whole-trajectory and no-localization controls. A fragment proposes an action that may meet a desired postcondition; it does not provide authority to overwrite unrelated operations or to bypass verification.

\begin{figure*}[t]
\centering
\includegraphics[width=\textwidth]{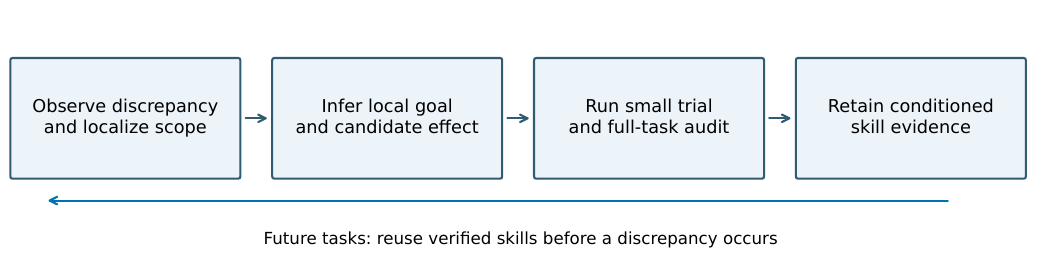}
\caption{The implemented passive-imitation loop. Structured feedback and public lineage identify the local scope; symbolic effect recognition supplies candidate skills. A local trial pass permits a tentative patch. Full-task acceptance contributes evidence for later proactive reuse. Candidate sources may include other executions or the controller's own trials.}
\label{fig:architecture}
\end{figure*}

\subsection{Small Trials and Full Verification}
For proposal $p'=p[j\leftarrow a]$, the environment returns microtask feedback $b'$. The patch is tentatively committed when the target contract passes. If it fails, the current program is retained and the next distinct candidate is tried. The controller always chooses the first currently failed diagnostic in the fixed stage order. It stops when all current contracts pass, the trial budget is exhausted, or no eligible proposal remains. A final full-task audit is performed after any initially failed submission, even if microtask feedback appears satisfactory.

This design separates hypothesis testing from promotion. A small test can reject a candidate cheaply, and in the constructed vocabulary it identifies each valid transformation. Outside that vocabulary it is only evidence on a subset. The full audit therefore remains part of the controller. Its role cannot be replaced by a positive incentive or a plausible explanation. In the implementation, tentative commitment depends on the target contract, not on the sign of a reward.

\begin{algorithm}[t]
\caption{Passive Imitation Episode}
\label{alg:pi}
\begin{algorithmic}[1]
\REQUIRE Task $x$, persistent evidence $E$, fragments $\mathcal F$, budget $K$, acquisition flag $L$
\STATE $p\gets\operatorname{Initialize}(x,E)$; $y\gets\operatorname{FullAudit}(x,p)$
\STATE $b\gets y.\text{contracts}$; $\mathcal H\gets\emptyset$; $\mathcal U\gets[\,]$
\IF{$y.\text{accepted}$}
  \IF{$L$} \STATE Record full-task confirmation for $p$ in $E$ \ENDIF
  \STATE \textbf{return} $(p,y,E)$
\ENDIF
\FOR{$t=1$ to $K$}
  \IF{all contracts in $b$ pass} \STATE \textbf{break} \ENDIF
  \STATE $d\gets$ first violated diagnostic; $j\gets G(d.\text{path})$; $g\gets r_j$
  \STATE $a\gets$ next candidate from $E,\mathcal F,\mathcal A_j$, excluding repeats and current program
  \IF{no candidate exists} \STATE \textbf{break} \ENDIF
  \STATE $p'\gets p[j\leftarrow a]$; add $(j,p')$ to $\mathcal H$
  \STATE $y'\gets\operatorname{SmallTrial}(\mu(x),p')$; compute $r_t$ by Eq.~\ref{eq:reward}
  \STATE Append $(j,a,\text{target pass},r_t)$ to $\mathcal U$
  \IF{the target contract passes in $y'$}
     \STATE $p\gets p'$; $b\gets y'.\text{contracts}$
  \ENDIF
\ENDFOR
\STATE $y\gets\operatorname{FullAudit}(x,p)$
\IF{$L$}
  \STATE Update $E$ from $\mathcal U$; if $y$ is accepted, add distinct full-task confirmations for $p$
\ENDIF
\STATE \textbf{return} $(p,y,E)$
\end{algorithmic}
\end{algorithm}

\subsection{Incentive, Retention, and Proactive Reuse}
For a microtask with $n_\mu$ input rows and execution work $W_t$, the recorded incentive is
\begin{equation}
r_t=F(b)-F(b')-0.01\frac{W_t}{\max(1,n_\mu)}.
\label{eq:reward}
\end{equation}
If the target contract fails, the stored value is $\min(0,r_t)$. Successful and failed target trials increment their corresponding evidence counts during acquisition. A fully accepted task supplies one additional success and one unit of incentive to each action in its final program; its identifier is added to $U$ without duplication. Full-task rejection withholds those confirmations. It does not itself add a failure count to every stage.

The reuse score is
\begin{equation}
q(c,a)=\frac{s(c,a)+1}{s(c,a)+f(c,a)+2}.
\label{eq:score}
\end{equation}
An action may enter the initial program of a later task when $|U(c,a)|\geq2$ and $q(c,a)\geq0.7$. Eligible actions are ordered by $q$, with cumulative incentive as a tie break. The score has a smoothed-count form, but we do not interpret it as a calibrated posterior probability: microtask outcomes and full-task confirmations can be correlated and can arise within the same episode. The two-task requirement counts distinct task identifiers, not statistically independent proofs of reliability.

At frozen evaluation time, within-episode repair is permitted, but $E$ is not updated. Acquisition and evaluation are separate calls. This boundary is enforced by state hashes before and after each frozen batch. Resetting the retained evidence restores the cold controller. The architecture can therefore improve future first submissions through explicit experience accumulation, rather than through hidden changes during testing.

\section{Conditional Analysis}
\subsection{When Local Repair Has a Finite Query Bound}
The following analysis describes a finite interface, not unrestricted software repair. Let $J$ be the stages failing in the initial program. Three assumptions specify when local evidence is sufficient.

\begin{assumption}[Faithful Localization]
Every violated stage contract identifies its producing stage through $G$, and every incorrect stage is exposed by a violated contract. The diagnostic order visits an exposed incorrect stage until it is repaired.
\end{assumption}
\begin{assumption}[Stable Identifiability]
For each condition $(v,j,g)$ there is a unique correct action. For all inputs reachable during repair, that action passes the stage contract and every other action fails it. The shared microtask and full task agree on these contract outcomes.
\end{assumption}
\begin{assumption}[Complete Local Search]
Candidates cover every alternative action in $\mathcal A_j$. Trials alter only the selected action, and failed candidates are not retried at that stage before a successful repair.
\end{assumption}

\begin{proposition}[Finite Local Repair]
Under Assumptions 1--3, a correct program is found using at most
\begin{equation}
Q\leq\sum_{j\in J}(|\mathcal A_j|-1)
\label{eq:bound}
\end{equation}
small trials and two full submissions, provided $K\geq Q$.
\end{proposition}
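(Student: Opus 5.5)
The plan is to argue stage by stage, using a charging argument that assigns each small trial to the stage it targets. First we fix notation. Let $J$ be the set of initially failing stages. Assumption~1 guarantees that $J$ is exactly the set of incorrect stages. Assumption~2 gives, for each $j\in J$, a unique correct action $a_j^\star$ for the condition $(v,j,g_j)$. We then check two things about the loop in Algorithm~\ref{alg:pi}. The first full submission is the initial $\operatorname{FullAudit}$. Every subsequent iteration picks the first violated diagnostic, and by Assumption~1 this localizes via $G$ to some currently incorrect stage $j$. The iteration then issues exactly one small trial at stage $j$.

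The key invariant is that \emph{a stage, once correct, stays correct, and no correct stage is ever modified}. We would establish it from three facts. Trials alter only the selected action (Assumption~3). A patch is committed only when the target contract passes. By Assumption~2, applied to inputs reachable during repair, the contract passes iff the action is $a_j^\star$. So a committed patch at stage $j$ installs exactly $a_j^\star$, and a rejected one leaves $p$ unchanged. Stages outside $J$ are never targeted, because they never produce a violated contract. For stages already repaired, Assumption~2 says $a_j^\star$ keeps passing on every reachable input, even after upstream changes. This is the point we expect to need the most care. Repairing an upstream stage changes $z_{j-1}$ for downstream stages and could in principle flip their contract outcomes. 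We handle this by reading the ``for all inputs reachable during repair'' clause as exactly the noninterference needed. It also implies that the set of incorrect stages can only shrink. The downstream case works in both directions: a downstream stage that was correct stays correct, and an incorrect one stays failing and exposed.

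Next comes the counting step. Fix $j\in J$. Every trial charged to $j$ before its repair uses a distinct action. The reason is that failed candidates are not retried at that stage (Assumption~3), and the current incorrect action is excluded. Candidates cover all of $\mathcal A_j$, so $a_j^\star$ appears after at most $|\mathcal A_j|-1$ trials; the current wrong action is never proposed. The successful trial is among these, and after it no more trials are charged to $j$. Summing over $j\in J$ gives $Q\le\sum_{j\in J}(|\mathcal A_j|-1)$. With $K\ge Q$ the loop is never cut short by the budget before every contract passes. It also never breaks for lack of candidates, because $a_j^\star$ is always still available.

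Finally, once all stages equal their $a_j^\star$, all contracts pass on the microtask, so the loop exits. The second $\operatorname{FullAudit}$ then accepts. Assumption~2 gives agreement between microtask and full-task contract outcomes, and a program that is correct at every stage reproduces the reference artifact. This yields exactly two full submissions. The main obstacle is the invariant in the second paragraph, namely making precise that Assumption~2's quantification over reachable inputs rules out downstream interference. The counting itself is routine.
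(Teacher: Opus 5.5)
Your proposal is correct and follows the paper's proof. Both arguments bound each initially failing stage by its $|\mathcal A_j|-1$ alternatives, use identifiability so that only the correct action is committed, use locality together with stability on reachable inputs so that a repaired stage never reverts, sum over $J$, and let the two full audits bracket the loop. You make explicit several details the paper leaves implicit: trials are only ever charged to stages in $J$, the loop never stalls for lack of candidates, and $J$ coincides with the set of incorrect stages (which needs Assumption~2 as well as Assumption~1).
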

\begin{proof}
At an incorrect stage, the current action is already known to fail. Exhausting its alternatives requires at most $|\mathcal A_j|-1$ queries and includes the unique correct action. Identifiability ensures that only this action is committed. Locality leaves other action choices unchanged, and stability ensures that a later change in their inputs cannot invalidate a correct action. Thus a repaired stage does not become incorrect again. Summing over initially incorrect stages proves the bound. The initial and final full submissions expose the initial discrepancy and verify the repaired complete program.
\end{proof}

For the present action-set sizes $(2,3,3)$, the worst-case bound is five small trials. The empirical budget of six therefore exceeds this bound. A budget of three need not suffice when candidate ordering is uninformed. On familiar revisions, effect-matched candidates often place the correct action first, reducing the number of needed queries to the number of incorrect stages. The cold composition result is consistent with this mechanism, rather than evidence of solving a large unconstrained search problem.

The strong assumptions explain the limits of the guarantee. A contract on a final aggregate alone may not expose which stage is wrong. A tiny fixture may fail to distinguish actions on rare cases. An upstream patch may change the meaning of a downstream predicate. In any of these situations the proof no longer applies. Our exhaustive checks verify micro/full agreement for all 18 programs on every generated primary task; they do not establish stable identifiability for programs or inputs outside this generated set.

\subsection{Why an Incentive Threshold Can Be Redundant}
\begin{proposition}[Redundant Promotion Threshold]
Suppose accepted tasks certify the unique correct action for a condition, and that action never receives failed local evidence. After two distinct full-task confirmations its score in Eq.~\ref{eq:score} is at least $3/4$. A threshold of $0.7$ therefore imposes no additional promotion delay relative to the two-task condition.
\end{proposition}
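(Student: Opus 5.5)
The plan is to bound the score in Eq.~\ref{eq:score} using only the bookkeeping rules of Section~4.3 and then compare the two promotion conditions.

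First, fix the condition $c=(v,j,g)$ and its unique correct action $a^\star$. The hypothesis says $a^\star$ never receives failed local evidence, so $f(c,a^\star)=0$ at every time. Next, I will argue that $|U(c,a^\star)|\geq2$ forces $s(c,a^\star)\geq2$. By the retention rule, each fully accepted task whose final program contains $a^\star$ at stage $j$ adds its identifier to $U$ and, in the same update, adds one success to $s$. Successes are never decremented: full-task rejection withholds confirmations but adds no failures, and target-contract failures would only raise $f$, which is excluded. The set $U$ grows only through such accepted-task updates, so each distinct identifier in $U$ is matched by at least one success increment. Microtask passes can only increase $s$ further. Hence two distinct confirmations give $s\geq2$.

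Second, I will use monotonicity. With $f=0$ the score is $q=(s+1)/(s+2)$, which is increasing in $s$. So $s\geq2$ gives $q\geq3/4$, and $3/4>0.7$. Consequently, whenever the two-task condition $|U|\geq2$ holds for $a^\star$, the threshold condition $q\geq0.7$ already holds. The set of times at which $a^\star$ becomes eligible is therefore exactly the set of times at which $|U|\geq2$, so the threshold adds no delay. For completeness I will note that $s=1$ would give only $2/3<0.7$. The threshold can thus bind for a single confirmation, but that case is already excluded by the two-task requirement.

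The main obstacle is the first step: showing that $s$ is tied to $|U|$. A retried accepted task may add a success without adding a new identifier, which is harmless because it only increases $s$. The subtler issue is that the score is indexed by condition $c$, which includes $g$. I must make sure the confirmation credited to ``each action in its final program'' is recorded under the same key $(v,j,g_j)$ that indexes $U$. I will state this as the reading of the update rule: the success count and the identifier set are updated together under one key. The second step is a routine computation.
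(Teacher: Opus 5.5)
Your proposal is correct and follows the paper's own argument: $|U|\ge2$ gives $s\ge2$, the hypothesis gives $f=0$, and monotonicity yields $(s+1)/(s+2)\ge 3/4>0.7$. Your check that each identifier added to $U$ comes with a success increment in the same record $E(c,a)$ only spells out what the paper takes for granted. The paper also adds that uniqueness prevents any incorrect action from collecting valid full-task confirmations for the same condition, so the threshold never screens out a competitor either; this is a useful complement, but the stated no-delay claim does not need it.
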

\begin{proof}
Two confirmations imply $s\geq2$ and the premise gives $f=0$. Hence $(s+1)/(s+2)\geq3/4>0.7$. Uniqueness prevents a second incorrect action from receiving valid full-task confirmations for the same condition.
\end{proof}

This proposition is about eligibility, not a universal equivalence between two controllers. Candidate ordering before promotion can still differ, and noisy tests can invalidate the premise. It nevertheless explains why the numerical threshold has little opportunity to help in a deterministic world with perfect diagnostic separation. The strong control removes that threshold and removes evidence-score ranking within source fragments, but shares the remaining controller machinery. The reported tie is therefore plausible and substantively important.

\subsection{What Verification Does Not Guarantee}
Two accepted tasks do not certify distribution-free reliability. An action can pass sampled inputs and fail elsewhere. Adaptive selection and correlated tests also preclude simply multiplying per-test error probabilities. The evidence score therefore supplies no calibrated probability of future correctness.

Version keys prevent automatic promotion across versions but fragment experience when versions share semantics. Cross-version fragments remain hypotheses requiring tests. This pilot measures consequences of that reuse--caution tradeoff without optimizing applicability conditions.

\section{Experimental Design}
\subsection{A Finite Reporting Environment}
The environment has three ordered stages. Population selection either keeps all records or only settled records. Revision resolution keeps all revisions or chooses the oldest or latest revision per order. Amount normalization interprets source values as yuan, cents, or hundreds of yuan. Actions have opaque identifiers $\{F0,F1\}$, $\{D0,D1,D2\}$, and $\{N0,N1,N2\}$. Each simulated world independently permutes the meaning of these identifiers for revisions A, B, and C.

Ordinary tasks contain six orders with two revisions each. Larger schema-shift tasks contain eight orders with three revisions each and allow settlement status to change across revisions. Column names vary, but a public role mapping is always supplied. Schema shift thus tests execution under changed layouts and row structure with known roles; it does not test autonomous schema understanding. Every task requests an exact final total and the identities of its supporting rows.

The source corpus contains 24 accepted demonstrations per world: two familiar revisions, six atomic/default requests, and two separately generated examples for each pair. A default request asks for all records, all revisions, and yuan. Each of the other five atomic requests changes one of these goals. Source transitions are converted into 16 distinct revision-conditioned fragments. Complete requests with at least two nondefault goals are withheld from both demonstrations and the initial acquisition stream. There are 12 such combinations.

The microtask contains the first two order identifiers and all their revisions: four rows ordinarily and six rows in the larger tasks. This is a deliberately identifying fixture for the finite operator vocabulary. It is shared by all controllers, is not selected using a hidden correct-action label, and is checked exhaustively against full-task contract results. The ease of constructing it is part of the benchmark design, not an autonomous experimental-design achievement.

\subsection{Primary Protocol and Controls}
We fix ten seeds: 11, 29, 47, 61, 83, 101, 127, 149, 173, and 197. Each seed defines one tool world. Acquisition supplies 96 atomic tasks, consisting of eight instances of each familiar revision/request pair. Final evaluation has 240 tasks per world, evenly divided among new atomic instances, held-out compositions, larger schema-shift compositions, and unseen-revision-C compositions. All controllers see the same tasks and public information. The local protocol was recorded before the primary run; it was not publicly preregistered.

Cold controllers are evaluated at $K=3$. After acquisition, frozen controllers are evaluated at $K\in\{1,3,6\}$. Separate 24-task frozen probes are reused at acquisition checkpoints 0, 24, 48, and 96. The final task identifiers are disjoint from acquisition, source, and probe identifiers. Reusing probes across checkpoints produces repeated measurements, not independent samples. A reset condition discards acquired evidence and reevaluates \PI.

Eight controllers isolate the relevant operations. \emph{Whole-trajectory replay} proposes only complete source programs, retains accepted complete workflows by exact request and revision, and tests proposals with the same microtask budget. \emph{Full \PI} implements Algorithm~\ref{alg:pi}. \emph{No localization} uses the diagnosed goal to rank complete templates but replaces the whole program. \emph{No goal matching} preserves local scope while ranking raw templates by request similarity instead of matching inferred transition effects. \emph{No small trial} directly adopts the highest-ranked local candidates and proceeds to the second full audit. \emph{No retention}, named \texttt{pi\_no\_incentive} in code, disables all persistent learned-action evidence. \emph{Compositional repair} preserves local goal-directed testing and two-task confirmation while removing the additional score threshold and source-fragment evidence ranking. \emph{Self-trial only} enumerates typed actions without consulting demonstrations, while preserving localization, verification, and learning.

The no-retention condition is not a clean ablation of a reward formula: it removes cross-task verified reuse altogether. The strong compositional control is needed to assess whether the extra numerical machinery contributes beyond local repair and confirmation. It still shares confidence-based tie ordering in some helper functions; it is a targeted control, not an independently engineered competing system. These details are exposed in the implementation appendix.

\subsection{Measurement and Integrity}
Final success is the fraction of tasks accepted by the independent full evaluator. First success measures acceptance before current-episode correction. Final collateral regression occurs when a contract passing initially fails at the final audit. This differs from the number of regressions across tentative commits, which is logged separately. Mean row-stage visits includes unsuccessful attempts and trial work; lower cost is not automatically preferable if success also falls.

We aggregate 60 final tasks within each seed/group, then summarize the ten seed values. Since group sizes are equal, the mean equals the pooled rate across 600 tasks. Where reported, standard deviations describe variability across the ten worlds, not an independent-task confidence interval. Common tasks make method comparisons paired; task counts must not be treated as 600 independent tool environments. We use descriptive comparisons without selecting a statistical significance threshold.

Before score interpretation, the integrity suite checks 3,600 generated tasks and all 18 programs per task, for 64,800 full/micro comparisons. It verifies a unique accepting program, the rejection of corrupted totals and provenance, source-effect inference, disjoint identifiers, intervention scope, budget limits, and frozen-state hashes. After the first run, a logging-only omission in the trajectory control's tentative-regression counter was corrected. The experiment was rerun; every other summary field was verified unchanged. The public release was rerun again after changing only output paths, and all eight deterministic summary/check files were byte-identical to the corrected original outputs.

\section{Results}
\subsection{Local Repair Enables Withheld Compositions}
Table~\ref{tab:main} reports cold composition results at $K=3$. \PI\ reaches 100\% final success, using 1.74 small trials on average. Whole-trajectory replay remains at 6.17\%, the same rate as its initial default program. This large gap has a specific interpretation: none of the observed atomic complete programs is an exact solution to a genuinely multi-goal familiar-revision request. The replay controller cannot assemble missing combinations. Its residual successes arise from the default opaque identifiers happening to implement the requested combination in a particular world.

Removing localization reduces success to 9.50\% and produces 41.83\% final collateral regression. A complete replacement can satisfy the currently targeted goal while breaking a previously correct stage. Locality prevents that behavior in this finite world. Removing effect-to-goal matching has a smaller effect: success falls to 92.50\%, with 1.90 trials. Raw request similarity still contains useful information, so this condition does not remove all goal information or all reasoning about the request.

\begin{table*}[t]
\caption{Primary results at $K=3$. Cold composition columns use no acquired action evidence. Learned first success follows 96 atomic acquisition tasks. Unseen C is evaluated with frozen A/B-acquired evidence. Percentages are means over ten worlds, 60 tasks per group per world. Parentheses in the last column give the across-world standard deviation in percentage points. ``No retention'' corresponds to \texttt{pi\_no\_incentive}.}
\label{tab:main}
\centering
\begin{tabular}{lrrrrr}
\toprule
&\multicolumn{3}{c}{Cold held-out composition}&Learned composition&Unseen C\\
\cmidrule(lr){2-4}
Controller&Final success&Small trials&Collateral&First success&Final success (SD)\\
\midrule
Whole-trajectory replay&6.17&2.82&12.83&6.17&22.67 (12.02)\\
Full \PI&100.00&1.74&0.00&100.00&70.00 (21.20)\\
No localization&9.50&2.82&41.83&100.00&19.17 (10.10)\\
No goal matching&92.50&1.90&0.00&100.00&76.00 (19.57)\\
No small trial&100.00&0.00&0.00&100.00&24.00 (14.93)\\
No retention&100.00&1.74&0.00&6.17&70.00 (21.20)\\
Compositional repair&100.00&1.74&0.00&100.00&70.00 (21.20)\\
Self-trial only&80.33&2.11&0.00&100.00&79.00 (9.91)\\
\bottomrule
\end{tabular}
\end{table*}

The no-small-trial condition also solves all cold familiar compositions. Same-revision effect-matched fragments already propose correct local actions, making validation redundant for these tasks. Its work is 58.82 row-stage visits versus 75.90 for \PI. A claim that small trials always improve efficiency would therefore contradict the measured behavior. Their benefit must be assessed in environments where proposal reliability is lower.

The strong compositional repair control matches \PI\ in success, trials, work, and collateral regression in the reported settings. This comparison prevents attributing the composition result uniquely to the incentive score. Self-trial enumeration, which uses no demonstrations, achieves 80.33\% cold success and also has zero collateral regression. Demonstrations improve candidate ordering on familiar semantics, but are not a necessary prerequisite for correction or subsequent learning.

\subsection{Retention Changes Future First Submissions}
After 96 atomic acquisition tasks, \PI\ solves every held-out composition on the first submission. The same is true of the strong control and self-trial controller. They have acquired componentwise associations that can be recombined into complete requests never observed in acquisition. The no-localization and no-goal variants also reach 100\% first success after this repeated atomic stream; their cold repair weaknesses do not prevent eventual acquisition in the simpler training regime.

Disabling retention preserves final success at 100\% on familiar compositions but leaves first success at 6.17\%. It repeatedly performs the same 1.74-trial correction. Mean evaluation work is 75.90 row-stage visits without retention and 28.98 with acquired \PI\ skills, a reduction of 61.82\%. This is an evaluation-phase work reduction. Acquisition, demonstrations, and retrieval costs are not amortized into that percentage. It does not establish an end-to-end token saving or a favorable break-even point for an LLM implementation.

Whole-trajectory replay remains at 6.17\% on withheld compositions, as expected from its restricted representation. Local verified associations instead transfer across recombinations while preserving the frozen-state boundary; this comparison does not establish a failure of modern memory-based agents.

\subsection{New Tool Semantics Expose Negative Transfer}
Revision C permutes action meanings independently of A and B. Acquired records use version-conditioned keys, so they do not authorize proactive reuse in C. Retrieved source fragments nevertheless influence the order in which opaque actions are tested. At $K=3$, \PI\ obtains 70.0\% final success compared with 24.0\% for direct adoption without small trials. Validation and local search repair many proposals whose old meanings are no longer reliable.

However, self-trial enumeration reaches 79.0\%, and removing goal-based fragment matching reaches 76.0\%. Thus the source-induced ordering is not uniformly useful. An action identifier associated with the desired effect in another revision can consume a scarce trial without carrying predictive information about C. Across-world variation is substantial: the standard deviation is 21.20 percentage points for \PI, versus 9.91 for self-trial enumeration. A pooled mean alone hides this variation in transfer difficulty.

Figure~\ref{fig:adaptation}, left, shows the budget dependence. At $K=1$, \PI\ reaches 21.5\%, below the 24.0\% no-trial condition. At $K=6$, \PI, compositional repair, and self-trial enumeration all reach 100\%, consistent with the finite-search bound. Self-trial still uses fewer trials on average in this setting: 2.46 versus 2.67 for \PI. These results identify candidate ordering as a meaningful failure point rather than supporting an unconditional advantage for imitation.

\begin{figure*}[t]
\centering
\includegraphics[width=\textwidth]{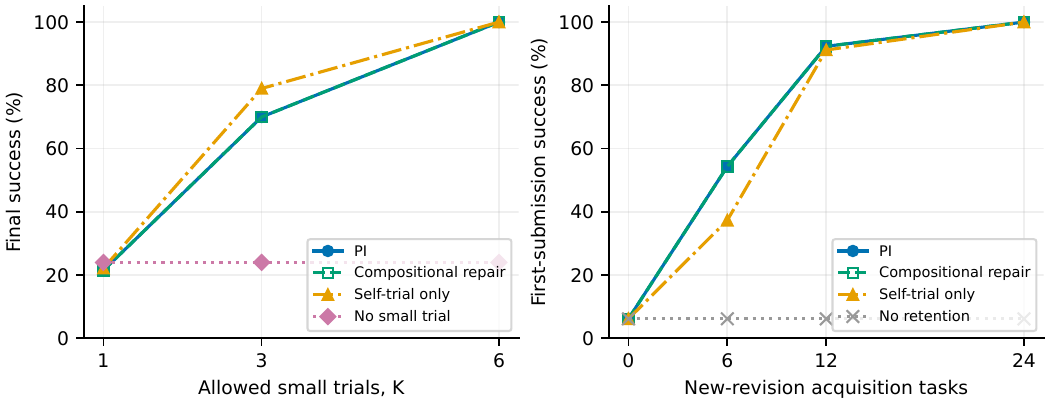}
\caption{Left: final success on previously unseen revision C after A/B acquisition, with persistent state frozen. The no-trial controller does not use $K$. Right: first-submission success after separate C acquisition tasks in the exploratory follow-up. Lines show means over ten worlds; PI and compositional repair overlap. Repeated evaluations use the same frozen final C tasks. Learning C changes the interpretation from unseen-revision transfer to retention within a learned revision.}
\label{fig:adaptation}
\end{figure*}

\subsection{Exploratory Adaptation to the New Revision}
The primary frozen experiment cannot measure learning across C tasks because its persistent state is intentionally fixed. After examining those results, we therefore declared a separate exploratory follow-up. It preserves the original seeds, tools, final C tasks, and $K=3$. Each controller first rebuilds its A/B history, then acquires experience on 24 separately generated C tasks: two instances of each multi-goal request, shuffled independently. Their task identifiers and rows are separate from the final evaluation tasks. Four controllers are evaluated at 0, 6, 12, and 24 C exposures.

For \PI, first-submission success rises from 6.17\% to 54.33\%, 92.33\%, and 100\%. Final success rises from 70.0\% to 96.5\%, 100\%, and 100\%. Mean work falls from 82.19 to 49.40, 31.47, and 28.93 row-stage visits. The strong compositional control follows the same curve. Self-trial reaches 37.33\%, 91.17\%, and 100\% first success at the three nonzero checkpoints. No retention remains at 6.17\% first success and 70.0\% final success throughout.

These results demonstrate accumulation of verified mappings in a newly encountered but subsequently learned revision. They do not show zero-shot transfer after 24 exposures, nor do they isolate a benefit from numerical incentives over simpler verified reuse. Because this follow-up was designed after the primary analysis, it is reported as exploratory instead of being presented as a prespecified confirmatory experiment.

\subsection{An Inspectable Correction Trace}
One archived composition task in seed 11 begins with $(F0,D0,N0)$ and violates all three contracts. The feedback identifies the selected-record path; goal projection requests all records. A fragment proposes $F1$, and the microtask confirms that local effect. The remaining discrepancies then lead to $D1$ for oldest revisions and $N2$ for values denominated in hundreds of yuan. The recorded incentives are 0.96, 0.97, and 0.97 as the failed-contract count falls from three to zero. A full-task audit accepts the final program.

The trace exposes each supplied mechanism: path-based localization, symbolic goal projection, programmed effect recognition, trial evidence, and full confirmation. Acquisition updates future preferences; frozen testing does not. The complete example and implementation are released.

\section{Discussion and Limitations}
\subsection{What the Study Establishes}
The study supports local repair and conditioned retention when interfaces expose the relevant factors. The query analysis explains their efficiency and the strong control's identical results. It does not establish an advantage for a unique incentive mechanism.

Perfectly identifying tests make promotion scores largely redundant. The no-retention ablation tests retained knowledge rather than reward shape, while negative transfer shows that an effect label does not justify reusing an opaque action across versions.

Self-improvement here means that earlier accepted tasks modify persistent state and improve later first submissions. It does not mean discovering new algorithms, expanding neural reasoning capacity, or adapting to arbitrary goal and tool changes.

\subsection{Threats to External Validity}
First, this is one synthetic domain with 18 possible complete programs and predefined stage boundaries. The environment was designed to make composition informative. A whole-trajectory-only controller consequently has a representational disadvantage. The results cannot be generalized to all retrieval methods, much less to existing LLM agents with unrestricted planning and code generation.

Second, localization and abstraction receive unusually informative interfaces. Public role bindings, typed lineage, and stage assertions make scope selection inexpensive and largely deterministic. Natural tasks may expose only a vague complaint or a final incorrect answer. The ability to infer a correct local goal from that feedback is not tested here. In particular, no claim about detecting or resolving natural-language hallucinations follows from these experiments.

Third, microtests are diagnostic by construction. Their agreement with the full task is exhaustively verified in the finite generated world, but their reliability would be unknown for an arbitrary program, external API, or changing dataset. The full evaluator also has access to a reference computation that may be unavailable in practice. Plausible-looking feedback and a positive incentive cannot substitute for that missing verifier.

Fourth, the study does not constrain persistent storage, bound prompt length, or measure language-model tokens. The evidence table retains task identifiers, and its size can grow with the acquisition stream. A bounded-memory implementation would need to decide which conditions, failures, and confirmations can be merged or discarded. Our measurements establish only that learned reuse can reduce repeated interpreter work in evaluation.

Finally, the thresholds, cost coefficient, candidate-ordering rules, and fixture design are not validated across a broad family of environments. We report fixed seeds and preserve the original protocol, but do not claim public preregistration or a comprehensive hyperparameter study. The online follow-up is exploratory, and the ten tool worlds share the same hand-designed task generator. Both facts limit the strength of extrapolation.

\subsection{Implications for Architecture Research}
A stronger test would require scope, goal, or applicability inference under imperfect feedback, with independently measured outcomes. It should retain an equally expressive repair-and-reuse control and a demonstration-free condition, since modularity and enumeration already explain substantial present benefits.

An LLM implementation should charge diagnosis, candidate generation, trial interpretation, memory maintenance, and failed attempts to a matched total budget. Token-saving claims require measured totals and acquisition-inclusive success--cost comparisons; those results are outside this study.

\section{Conclusion}
Passive imitation turns structured discrepancies into local repair hypotheses and accumulates condition-dependent verified skills. In a finite reporting environment, locality enables withheld compositions and retention improves future first submissions. Trials help unreliable proposals, whereas familiar fragments need no trials and source ordering can hurt under semantic changes. A compositional control matches PI, supporting the shared mechanism without a unique incentive advantage. Released code, exhaustive checks, and negative findings provide a basis for testing learned, uncertain interfaces.
\label{mainbodyend}

\clearpage
\section*{Impact Statement}
This work studies a small synthetic controller and makes no deployment claim. Verified local reuse could reduce repeated mistakes, but incorrect applicability conditions can propagate failures across tasks. In consequential settings, small tests must not be interpreted as guarantees, and retained behavior must remain subject to appropriate full-task checks. All experimental records are synthetically generated; no personal documents, user feedback, or human-subject data are used. Public code and explicit reporting of baseline ties and transfer failures are intended to make unsupported extrapolations easier to identify.

\bibliography{references}
\bibliographystyle{icml2025}

\clearpage
\onecolumn
\appendix
\section{Implementation and Reproduction}
This appendix contains the complete three-script implementation used for the public portable release. Python 3.10 or newer is sufficient; the executed release used Python 3.12. No neural model, GPU, third-party Python package, or network connection is required for the experiment itself. Run \texttt{verify\_correction.py}, then \texttt{run\_correction.py}, then \texttt{run\_online\_adaptation.py}. Outputs are written to a \texttt{results} directory beside the scripts. Run the primary script before the follow-up so that this directory exists.

The main run executes 94,595 episodes: 79,200 final/reset evaluations, 7,680 acquisition episodes, 7,680 checkpoint evaluations, and 35 detailed examples. The follow-up executes 14,400 episodes, including reconstruction of A/B histories. These totals include repeated evaluation conditions and are not counts of independent tasks. The portable rerun took approximately 21.93 seconds for the primary experiment and 3.49 seconds for the follow-up on the local CPU; these unprofiled measurements are supplied for practical reproduction only, not as comparative runtime results. Integrity checks run separately.

The output directory adjustment is the only change between the corrected original controller and the public release. Eight deterministic result/check files were compared byte for byte. Execution-time metadata and the script checksum are expected to change across releases or runs. Source hashes and the result comparison are recorded in \texttt{REPRODUCIBILITY.json} at the repository. The archived local protocol predates the primary run but was not externally timestamped as a preregistration. Its initial description of compositional repair as a simple lookup should be read with the exact shared helper functions below: the control removes the promotion score threshold and source-fragment score ranking, while retaining shared ordering in other paths.

\subsection{Environment, Controllers, and Primary Experiment}
\begin{lstlisting}
"""CPU-only PI correction-loop benchmark. All scores arise from execution.

Policy inputs contain public requests, anonymous tool action IDs, observed
traces and assertion explanations. Hidden registries are used only by tools,
demonstrators and the independent evaluator, never read by policy code.
"""
from __future__ import annotations
import copy
import csv
import hashlib
import itertools
import json
import random
import statistics
import time
from collections import defaultdict
from dataclasses import asdict, dataclass
from pathlib import Path

ROOT = Path(__file__).resolve().parent
OUT = ROOT / 'results'
STAGES = ('population', 'revision', 'amount')
ACTION_IDS = {'population': ('F0', 'F1'), 'revision': ('D0', 'D1', 'D2'),
              'amount': ('N0', 'N1', 'N2')}
EFFECTS = {'population': ('all', 'settled'), 'revision': ('all', 'oldest', 'latest'),
           'amount': ('yuan', 'cent', 'hundred_yuan')}
DEFAULT_GOALS = {'population': 'all', 'revision': 'all', 'amount': 'yuan'}
DEFAULT_PROGRAM = {stage: ACTION_IDS[stage][0] for stage in STAGES}
SEEDS = [11, 29, 47, 61, 83, 101, 127, 149, 173, 197]
METHODS = ['trajectory_replay', 'pi_full', 'pi_no_localization', 'pi_no_goal',
           'pi_no_trial', 'pi_no_incentive', 'compositional_repair', 'self_trial_only']
ASSERTIONS = {
    'population_constraint': {'path': 'selected_record_ids',
        'explanation': "Included records violate the report's requested population."},
    'revision_constraint': {'path': 'resolved_record_ids',
        'explanation': 'Chosen revisions violate the requested identity/revision rule.'},
    'value_constraint': {'path': 'normalized_amounts',
        'explanation': 'Displayed amounts do not preserve value under the declared source denomination.'}}
# Public typed pipeline data lineage, not generator-provided fault locations.
LINEAGE = {'selected_record_ids': 'population', 'resolved_record_ids': 'revision',
           'normalized_amounts': 'amount'}


def stable_int(value):
    return int(hashlib.sha256(value.encode()).hexdigest()[:16], 16)


@dataclass
class Task:
    task_id: str
    group: str
    tool_revision: str
    request: dict
    roles: dict
    rows: list


def make_task(seed, task_id, group, revision, request, shifted=False):
    r = random.Random(stable_int(f'{seed}|{task_id}|rows'))
    roles = {name: f'{name}_{r.randrange(10**9):09d}' for name in
             ('amount', 'order', 'settled', 'updated', 'row_id')}
    rows = []
    for order in range(8 if shifted else 6):
        for version in range(3 if shifted else 2):
            rows.append({roles['amount']: r.randrange(100, 20000) + version,
                         roles['order']: f'order{order}',
                         roles['settled']: (order + version) % 2 == 0 if shifted else order % 2 == 0,
                         roles['updated']: version,
                         roles['row_id']: f'{task_id}:order{order}:v{version}'})
    r.shuffle(rows)
    return Task(task_id, group, revision, dict(request), roles, rows)


def atomic_requests():
    requests = [dict(DEFAULT_GOALS)]
    for stage in STAGES:
        for goal in EFFECTS[stage]:
            if goal != DEFAULT_GOALS[stage]:
                request = dict(DEFAULT_GOALS); request[stage] = goal; requests.append(request)
    assert len(requests) == 6
    return requests


def acquisition_tasks(seed):
    specs = [(revision, request) for revision in ('A', 'B') for request in atomic_requests()] * 8
    random.Random(seed + 1000).shuffle(specs)
    return [make_task(seed, f'learn-{i:03}', 'atomic_acquisition', rev, req)
            for i, (rev, req) in enumerate(specs)]


def evaluation_tasks(seed, prefix='test', size=240):
    r = random.Random(seed + (3000 if prefix == 'test' else 4000))
    compositions = [dict(zip(STAGES, values)) for values in itertools.product(*(EFFECTS[s] for s in STAGES))
                    if sum(a != DEFAULT_GOALS[s] for s, a in zip(STAGES, values)) >= 2]
    tasks = []
    for i in range(size):
        group = ('atomic_new', 'heldout_composition', 'schema_shift', 'new_tool_revision')[i // (size // 4)]
        rev = 'C' if group == 'new_tool_revision' else r.choice(('A', 'B'))
        request = r.choice(atomic_requests() if group == 'atomic_new' else compositions)
        tasks.append(make_task(seed, f'{prefix}-{i:03}', group, rev, request, shifted=group == 'schema_shift'))
    return tasks


def micro_task(task):
    """Same public microcase for all methods; two complete order histories."""
    order = task.roles['order']
    chosen = sorted({row[order] for row in task.rows})[:2]
    return Task(task.task_id + ':micro', task.group, task.tool_revision,
                dict(task.request), dict(task.roles),
                [dict(row) for row in task.rows if row[order] in chosen])


def observed_rows(task, rows):
    """Public adapter: bind varying field names to typed logical roles."""
    return [{role: row[column] for role, column in task.roles.items()} for row in rows]


def transform(rows, stage, effect):
    """Implementation of the environment's typed anonymous tool actions."""
    if stage == 'population':
        return [dict(row) for row in rows if effect == 'all' or row['settled']]
    if stage == 'revision':
        if effect == 'all': return [dict(row) for row in rows]
        ordered = sorted(rows, key=lambda row: row['updated'], reverse=effect == 'latest')
        seen, result = set(), []
        for row in ordered:
            if row['order'] not in seen: result.append(dict(row)); seen.add(row['order'])
        return result
    divisor = {'yuan': 1, 'cent': 100, 'hundred_yuan': .01}[effect]
    return [dict(row, amount=round(row['amount'] / divisor, 2)) for row in rows]


def canonical(rows):
    return sorted(rows, key=lambda row: row['row_id'])


class Environment:
    def __init__(self, seed):
        rng = random.Random(seed)
        self.registry = {}
        for revision in ('A', 'B', 'C'):
            self.registry[revision] = {}
            for stage in STAGES:
                values = list(EFFECTS[stage]); rng.shuffle(values)
                self.registry[revision][stage] = dict(zip(ACTION_IDS[stage], values))

    def expert_program(self, task):
        # Only demonstration generation and integrity tests call this method.
        return {s: next(action for action, effect in self.registry[task.tool_revision][s].items()
                        if effect == task.request[s]) for s in STAGES}

    def execute(self, task, program):
        rows = observed_rows(task, task.rows)
        trace, work = [], 0
        for stage in STAGES:
            before = rows
            effect = self.registry[task.tool_revision][stage][program[stage]]
            rows = transform(before, stage, effect)
            work += len(before)
            trace.append({'stage': stage, 'action': program[stage], 'before': before, 'after': rows})
        work += len(rows)  # aggregate / emit
        artifact = {'total_yuan': round(sum(row['amount'] for row in rows), 2),
                    'source_rows': sorted(row['row_id'] for row in rows)}
        return artifact, trace, work

    def independent_goal_rows(self, rows, stage, goal):
        """Separate reference implementation, not transform()."""
        if stage == 'population':
            return [dict(row) for row in rows if goal != 'settled' or bool(row['settled'])]
        if stage == 'revision':
            if goal == 'all': return [dict(row) for row in rows]
            buckets = defaultdict(list)
            for row in rows: buckets[row['order']].append(row)
            chooser = min if goal == 'oldest' else max
            return [dict(chooser(bucket, key=lambda x: x['updated'])) for bucket in buckets.values()]
        factor = {'yuan': 100, 'cent': 1, 'hundred_yuan': 10000}[goal]
        return [dict(row, amount=round(row['amount'] * factor / 100, 2)) for row in rows]

    def feedback(self, task, artifact, trace):
        passed = {}
        diagnostics = []
        for assertion, entry in zip(ASSERTIONS, trace):
            stage = entry['stage']
            wanted = self.independent_goal_rows(entry['before'], stage, task.request[stage])
            passed[assertion] = canonical(wanted) == canonical(entry['after'])
            if not passed[assertion]: diagnostics.append({'assertion': assertion, **ASSERTIONS[assertion]})
        reference = observed_rows(task, task.rows)
        for stage in STAGES:
            reference = self.independent_goal_rows(reference, stage, task.request[stage])
        expected = {'total_yuan': round(sum(row['amount'] for row in reference), 2),
                    'source_rows': sorted(row['row_id'] for row in reference)}
        accepted = (abs(artifact['total_yuan'] - expected['total_yuan']) < 1e-6
                    and artifact['source_rows'] == expected['source_rows'] and all(passed.values()))
        # Numeric answer / reference rows / registry are never in returned feedback.
        return {'accepted': accepted, 'passed': passed, 'diagnostics': diagnostics}

    def submit(self, task, program):
        artifact, trace, work = self.execute(task, program)
        return {'artifact': artifact, 'trace': trace, 'work': work,
                'feedback': self.feedback(task, artifact, trace)}


def demonstrations(environment, seed):
    corpus = []
    for revision in ('A', 'B'):
        for index, request in enumerate(atomic_requests()):
            for repeat in range(2):
                task = make_task(seed, f'source-{revision}-{index}-{repeat}',
                                 'external_atomic', revision, request)
                program = environment.expert_program(task)
                result = environment.submit(task, program)
                assert result['feedback']['accepted']
                corpus.append({'demo_id': task.task_id, 'task': asdict(task),
                               'program': program, 'trace': result['trace'],
                               'accepted': True})
    random.Random(seed + 2000).shuffle(corpus)
    return corpus


def infer_effect(entry):
    """Classify observed state changes. No registry or supplied correct-effect tag."""
    before, after = entry['before'], entry['after']
    before_ids = {row['row_id'] for row in before}; after_ids = {row['row_id'] for row in after}
    if entry['stage'] == 'population':
        if before_ids == after_ids: return 'all'
        if after_ids == {row['row_id'] for row in before if row['settled']}: return 'settled'
    elif entry['stage'] == 'revision':
        if before_ids == after_ids: return 'all'
        groups = defaultdict(list)
        for row in before: groups[row['order']].append(row)
        for label, choose in (('oldest', min), ('latest', max)):
            chosen = {choose(group, key=lambda x: x['updated'])['row_id'] for group in groups.values()}
            if chosen == after_ids: return label
    else:
        originals = {row['row_id']: row['amount'] for row in before}
        ratios = {round(row['amount'] / originals[row['row_id']], 8) for row in after}
        if len(ratios) == 1:
            return {1: 'yuan', .01: 'cent', 100: 'hundred_yuan'}.get(next(iter(ratios)))
    return None


def diagnosed_goals(task, feedback):
    """Symbolic explanation -> public producer scope -> requested postcondition."""
    result = []
    for diagnostic in feedback['diagnostics']:
        stage = LINEAGE[diagnostic['path']]
        result.append({'assertion': diagnostic['assertion'], 'stage': stage,
                       'goal': task.request[stage], 'explanation': diagnostic['explanation']})
    return result


def failed_count(feedback):
    return sum(not value for value in feedback['passed'].values())


class Controller:
    def __init__(self, method, corpus):
        self.method = method
        self.corpus = copy.deepcopy(corpus)
        self.templates = []
        self.fragments = []
        for demo in corpus:
            template = {'revision': demo['task']['tool_revision'], 'request': demo['task']['request'],
                        'program': demo['program'], 'demo_id': demo['demo_id']}
            if not any(t['revision'] == template['revision'] and t['request'] == template['request'] for t in self.templates):
                self.templates.append(template)
            for entry in demo['trace']:
                effect = infer_effect(entry)
                if effect is None: continue
                fragment = {'revision': demo['task']['tool_revision'], 'stage': entry['stage'],
                            'effect': effect, 'action': entry['action'], 'demo_id': demo['demo_id']}
                if not any(all(f[k] == fragment[k] for k in ('revision', 'stage', 'effect', 'action')) for f in self.fragments):
                    self.fragments.append(fragment)
        self.evidence = {}
        self.workflows = {}

    def state_hash(self):
        return hashlib.sha256(json.dumps(self.__dict__, sort_keys=True).encode()).hexdigest()

    def key(self, task, stage):
        return '|'.join((task.tool_revision, stage, task.request[stage]))

    def whole_key(self, task):
        return '|'.join([task.tool_revision] + [task.request[s] for s in STAGES])

    def candidate_evidence(self, task, stage, action):
        return self.evidence.get(self.key(task, stage), {}).get(action, {'success': 0, 'failure': 0,
                                                                        'accepted_tasks': [], 'reward': 0})

    def incentive_score(self, record):
        return (record['success'] + 1) / (record['success'] + record['failure'] + 2)

    def initial_program(self, task):
        program = dict(DEFAULT_PROGRAM)
        if self.method == 'pi_no_incentive': return program
        if self.method == 'trajectory_replay':
            record = self.workflows.get(self.whole_key(task))
            if record and len(record['tasks']) >= 2: return dict(record['program'])
            return program
        for stage in STAGES:
            options = self.evidence.get(self.key(task, stage), {})
            eligible = [(action, record) for action, record in options.items()
                        if len(record['accepted_tasks']) >= 2
                        and (self.incentive_score(record) >= .7 if self.method != 'compositional_repair' else True)]
            if eligible:
                program[stage] = max(eligible, key=lambda ar: (self.incentive_score(ar[1]), ar[1]['reward']))[0]
        return program

    def ranked_templates(self, task, stage=None, goal=None):
        def score(template):
            similarity = sum(template['request'][s] == task.request[s] for s in STAGES)
            compatible = template['revision'] == task.tool_revision
            goal_match = stage is not None and template['request'][stage] == goal
            return (4 * goal_match + 2 * compatible + similarity, template['demo_id'])
        return sorted(self.templates, key=score, reverse=True)

    def local_candidates(self, task, stage, goal):
        if self.method == 'self_trial_only': return list(ACTION_IDS[stage])
        if self.method == 'pi_no_goal':
            actions = [template['program'][stage] for template in self.ranked_templates(task)]
        else:
            candidates = [f for f in self.fragments if f['stage'] == stage and f['effect'] == goal]
            candidates.sort(key=lambda f: (f['revision'] == task.tool_revision,
                                           self.incentive_score(self.candidate_evidence(task, stage, f['action']))
                                           if self.method != 'compositional_repair' else 0,
                                           f['demo_id']), reverse=True)
            actions = [f['action'] for f in candidates]
        if self.method != 'pi_no_incentive':
            known = sorted(self.evidence.get(self.key(task, stage), {}).items(),
                           key=lambda ar: self.incentive_score(ar[1]), reverse=True)
            actions = [action for action, record in known if record['success'] > 0] + actions
        return list(dict.fromkeys(actions + list(ACTION_IDS[stage])))

    def reinforce(self, task, updates, final_program, final_feedback):
        if self.method == 'pi_no_incentive': return
        if self.method == 'trajectory_replay':
            if final_feedback['accepted']:
                key = self.whole_key(task)
                record = self.workflows.setdefault(key, {'program': dict(final_program), 'tasks': []})
                if task.task_id not in record['tasks']: record['tasks'].append(task.task_id)
            return
        for update in updates:
            stage, action = update['stage'], update['action']
            record = self.evidence.setdefault(self.key(task, stage), {}).setdefault(action,
                      {'success': 0, 'failure': 0, 'accepted_tasks': [], 'reward': 0})
            record['success' if update['passed'] else 'failure'] += 1
            record['reward'] += update['reward']
        if final_feedback['accepted']:
            for stage, action in final_program.items():
                record = self.evidence.setdefault(self.key(task, stage), {}).setdefault(action,
                          {'success': 0, 'failure': 0, 'accepted_tasks': [], 'reward': 0})
                record['success'] += 1
                record['reward'] += 1
                if task.task_id not in record['accepted_tasks']: record['accepted_tasks'].append(task.task_id)


def episode(controller, task, submit, budget=3, learn=False, detailed=False):
    program = controller.initial_program(task)
    first_program = dict(program)
    first = submit(task, program)
    current_feedback = first['feedback']
    initial_feedback = copy.deepcopy(current_feedback)
    full_work, probe_work, trials, events, updates = first['work'], 0, 0, [], []
    full_submissions = 1
    regression_commits = 0
    goals_before = diagnosed_goals(task, current_feedback)
    micro = micro_task(task)
    seen = set()
    if not current_feedback['accepted'] and controller.method == 'pi_no_trial':
        for diagnosis in goals_before:
            stage, goal = diagnosis['stage'], diagnosis['goal']
            program[stage] = controller.local_candidates(task, stage, goal)[0]
            events.append({'scope': [stage], 'goal': goal, 'action': program[stage],
                           'trial': False, 'reason': diagnosis['explanation']})
    elif not current_feedback['accepted']:
        if controller.method == 'trajectory_replay':
            best_failures = failed_count(current_feedback)
            for template in controller.ranked_templates(task):
                if trials >= budget: break
                proposal = dict(template['program'])
                signature = json.dumps(proposal, sort_keys=True)
                if signature in seen or proposal == program: continue
                seen.add(signature)
                result = submit(micro, proposal); trials += 1; probe_work += result['work']
                feedback = result['feedback']
                committed = failed_count(feedback) < best_failures
                regressions = sum(value and not feedback['passed'][assertion]
                                  for assertion, value in current_feedback['passed'].items())
                if committed:
                    program = proposal; current_feedback = feedback; best_failures = failed_count(feedback)
                    regression_commits += regressions
                events.append({'scope': list(STAGES), 'goal': 'complete_request',
                               'candidate_source': template['demo_id'], 'program': proposal,
                               'probe_passed': feedback['accepted'], 'committed': committed,
                               'regressions': regressions if committed else 0,
                               'failed_constraints': failed_count(feedback)})
                if feedback['accepted']: break
        else:
            while trials < budget:
                diagnoses = diagnosed_goals(task, current_feedback)
                if not diagnoses: break
                diagnosis = diagnoses[0]
                stage, goal = diagnosis['stage'], diagnosis['goal']
                proposals = []
                if controller.method == 'pi_no_localization':
                    for template in controller.ranked_templates(task, stage, goal):
                        proposals.append((dict(template['program']), template['demo_id']))
                else:
                    for action in controller.local_candidates(task, stage, goal):
                        proposal = dict(program); proposal[stage] = action
                        proposals.append((proposal, 'effect_fragment_or_own_trial'))
                proposed = next(((proposal, source) for proposal, source in proposals
                                 if proposal != program and
                                 (stage, json.dumps(proposal, sort_keys=True)) not in seen), None)
                if proposed is None: break
                proposal, source = proposed
                seen.add((stage, json.dumps(proposal, sort_keys=True)))
                result = submit(micro, proposal); trials += 1; probe_work += result['work']
                feedback = result['feedback']
                passed = feedback['passed'][diagnosis['assertion']]
                reward = failed_count(current_feedback) - failed_count(feedback) - .01 * result['work'] / max(1, len(micro.rows))
                # A localized pass is tentative. Full-task acceptance is required
                # for proactive future reuse of this conditioned action.
                updates.append({'stage': stage, 'action': proposal[stage], 'passed': passed,
                                'reward': reward if passed else min(0, reward)})
                regressions = sum(value and not feedback['passed'][assertion]
                                  for assertion, value in current_feedback['passed'].items())
                if passed:
                    program = proposal; current_feedback = feedback
                    regression_commits += regressions
                events.append({'scope': list(STAGES) if controller.method == 'pi_no_localization' else [stage],
                               'goal': goal, 'assertion': diagnosis['assertion'], 'reason': diagnosis['explanation'],
                               'candidate_source': source, 'program': proposal, 'probe_passed': passed,
                               'committed': passed, 'regressions': regressions if passed else 0,
                               'incentive': reward if passed else min(0, reward),
                               'failed_constraints': failed_count(feedback)})
    if not first['feedback']['accepted']:
        final = submit(task, program); full_submissions += 1; full_work += final['work']
    else:
        final = first
    final_feedback = final['feedback']
    if learn: controller.reinforce(task, updates, program, final_feedback)
    regression = any(value and not final_feedback['passed'][assertion]
                     for assertion, value in initial_feedback['passed'].items())
    assert full_submissions <= 2 and trials <= budget
    result = {'task_id': task.task_id, 'group': task.group, 'tool_revision': task.tool_revision,
              'success': int(final_feedback['accepted']), 'first_success': int(first['feedback']['accepted']),
              'initial_failed_constraints': failed_count(initial_feedback),
              'final_failed_constraints': failed_count(final_feedback),
              'small_trials': trials, 'full_submissions': full_submissions,
              'row_stage_visits': full_work + probe_work, 'probe_row_stage_visits': probe_work,
              'final_collateral_regression': int(regression),
              'collateral_regression_commits': regression_commits,
              'probe_failures': sum(e.get('probe_passed') is False for e in events),
              'accepted_incentive': sum(e.get('incentive', 0) for e in events if e.get('committed')),
              'initial_program': first_program, 'final_program': dict(program), 'events': events}
    if detailed:
        result.update(task=asdict(task), initial_response=first, final_response=final)
    return result


def frozen_batch(environment, controller, tasks, budget, seed, mode):
    before = controller.state_hash()
    rows = []
    for task in tasks:
        row = episode(controller, task, environment.submit, budget=budget)
        row.update(seed=seed, method=controller.method, mode=mode, trial_budget=budget)
        rows.append(row)
    assert controller.state_hash() == before, 'Persistent state changed during frozen evaluation'
    return rows


def summarize(rows, keys):
    grouped = defaultdict(list)
    for row in rows: grouped[tuple(row[k] for k in keys)].append(row)
    metrics = ['success', 'first_success', 'initial_failed_constraints', 'final_failed_constraints',
               'small_trials', 'full_submissions', 'row_stage_visits', 'probe_row_stage_visits',
               'final_collateral_regression', 'collateral_regression_commits', 'probe_failures', 'accepted_incentive']
    result = []
    for group, entries in sorted(grouped.items()):
        item = dict(zip(keys, group)); item['episodes'] = len(entries)
        for metric in metrics: item[metric] = statistics.mean(r[metric] for r in entries)
        result.append(item)
    return result


def write_csv(path, rows):
    with path.open('w', encoding='utf-8-sig', newline='') as f:
        writer = csv.DictWriter(f, list(rows[0])); writer.writeheader(); writer.writerows(rows)


def experiment():
    OUT.mkdir(parents=True, exist_ok=True)
    start = time.perf_counter()
    final, acquisition, curves, examples, corpora, task_archive, snapshots, checks = [], [], [], [], [], [], [], []
    for seed in SEEDS:
        environment = Environment(seed)
        corpus = demonstrations(environment, seed)
        corpora.extend(dict(seed=seed, **entry) for entry in corpus)
        train = acquisition_tasks(seed); tests = evaluation_tasks(seed); probes = evaluation_tasks(seed, 'probe', 24)
        task_archive.extend({'seed': seed, **asdict(task)} for task in train + tests + probes)
        for method in METHODS:
            controller = Controller(method, corpus)
            final.extend(frozen_batch(environment, controller, tests, 3, seed, 'cold'))
            for checkpoint in (0, 24, 48, 96):
                if checkpoint:
                    previous = {24: 0, 48: 24, 96: 48}[checkpoint]
                    for task in train[previous:checkpoint]:
                        row = episode(controller, task, environment.submit, budget=3, learn=True)
                        row.update(seed=seed, method=method, mode='acquisition', trial_budget=3)
                        acquisition.append(row)
                rows = frozen_batch(environment, controller, probes, 3, seed, f'checkpoint_{checkpoint}')
                for row in rows: row['checkpoint'] = checkpoint
                curves.extend(rows)
            for budget in (1, 3, 6):
                final.extend(frozen_batch(environment, controller, tests, budget, seed, 'trained'))
            checks.append({'seed': seed, 'method': method, 'frozen_evaluation': True,
                           'condition_keys': len(controller.evidence), 'whole_workflows': len(controller.workflows)})
            if method == 'pi_full':
                snapshots.append({'seed': seed, 'evidence': controller.evidence,
                                  'fragments': controller.fragments, 'state_hash': controller.state_hash()})
                reset = Controller(method, corpus)
                reset_rows = frozen_batch(environment, reset, tests, 3, seed, 'reset_verified_state')
                final.extend(reset_rows)
            if seed == SEEDS[0] and method in ('pi_full', 'trajectory_replay', 'pi_no_localization',
                                              'pi_no_trial', 'pi_no_goal'):
                fresh = Controller(method, corpus)
                for index in (60, 61, 62, 120, 180, 181, 182):
                    row = episode(fresh, tests[index], environment.submit, budget=3, detailed=True)
                    row.update(seed=seed, method=method, mode='cold_trace_example', trial_budget=3)
                    examples.append(row)
        print(f'seed {seed}: completed', flush=True)
    write_csv(OUT/'summary.csv', summarize(final, ['mode', 'trial_budget', 'method', 'group']))
    write_csv(OUT/'per_seed_summary.csv', summarize(final, ['seed', 'mode', 'trial_budget', 'method', 'group']))
    write_csv(OUT/'learning_curves.csv', summarize(curves, ['checkpoint', 'method', 'group']))
    write_csv(OUT/'acquisition_summary.csv', summarize(acquisition, ['seed', 'method']))
    for filename, records in [('episodes.jsonl', final), ('acquisition_episodes.jsonl', acquisition),
                              ('trace_examples.jsonl', examples), ('demonstrations.jsonl', corpora),
                              ('tasks.jsonl', task_archive)]:
        with (OUT/filename).open('w', encoding='utf-8') as f:
            for record in records: f.write(json.dumps(record, ensure_ascii=False) + '\n')
    (OUT/'learned_incentives.json').write_text(json.dumps(snapshots, indent=2), encoding='utf-8')
    metrics = {'status': 'executed_cpu_correction_loop', 'seeds': SEEDS, 'methods': METHODS,
               'trial_budgets': [1, 3, 6], 'full_submission_limit': 2,
               'external_demonstrations_per_seed': 24, 'acquisition_tasks_per_seed': 96,
               'final_tasks_per_seed': 240, 'main_final_episodes': len(final),
               'acquisition_episodes': len(acquisition), 'checkpoint_episodes': len(curves),
               'detailed_example_executions': len(examples),
               'total_episodes': len(final) + len(acquisition) + len(curves) + len(examples),
               'elapsed_seconds': time.perf_counter()-start,
               'llm_calls': 0, 'trained_neural_parameters': 0,
               'state_checks': checks,
               'script_sha256': hashlib.sha256(Path(__file__).read_bytes()).hexdigest(),
               'limitations': ['finite typed tools and symbolic goal parsing',
                 'structured instrumented assertion explanations available equally to all policies',
                 'schema adapter and executable composition grammar supplied',
                 'small fixtures identify this finite operator vocabulary; not arbitrary-program safety',
                 'reinforcement updates counts and preferences, not neural parameters',
                 'not LLM baseline reproductions or token-budget measurements']}
    (OUT/'metrics.json').write_text(json.dumps(metrics, indent=2), encoding='utf-8')
    print(json.dumps({k:v for k,v in metrics.items() if k not in ('state_checks','limitations')}, indent=2))


if __name__ == '__main__':
    experiment()
\end{lstlisting}
\clearpage
\subsection{Exhaustive Integrity Checks}
\begin{lstlisting}
"""Integrity checks performed before inspecting controller scores."""
import copy
import itertools
import json
from collections import Counter
from dataclasses import fields
import run_correction as p


def verify():
    task_count = program_count = 0
    for seed in p.SEEDS:
        env = p.Environment(seed)
        train, tests, probes = p.acquisition_tasks(seed), p.evaluation_tasks(seed), p.evaluation_tasks(seed, 'probe', 24)
        corpus = p.demonstrations(env, seed)
        assert len(train) == 96 and len(corpus) == 24
        assert set(Counter((t.tool_revision, tuple(t.request[s] for s in p.STAGES)) for t in train).values()) == {8}
        assert set(Counter(t.group for t in tests).values()) == {60}
        assert all(sum(t.request[s] != p.DEFAULT_GOALS[s] for s in p.STAGES) <= 1 for t in train)
        assert all(sum(d['task']['request'][s] != p.DEFAULT_GOALS[s] for s in p.STAGES) <= 1 for d in corpus)
        sets = [{t.task_id for t in train}, {t.task_id for t in tests}, {t.task_id for t in probes},
                {d['demo_id'] for d in corpus}]
        assert sum(map(len, sets)) == len(set.union(*sets))
        for demo in corpus:
            for entry in demo['trace']:
                inferred = p.infer_effect(entry)
                actual = env.registry[demo['task']['tool_revision']][entry['stage']][entry['action']]
                assert inferred == actual
        for task in train + tests + probes:
            expert = env.expert_program(task)
            golden = env.submit(task, expert)
            assert golden['feedback']['accepted']
            bad = dict(golden['artifact'], total_yuan=golden['artifact']['total_yuan'] + 1)
            assert not env.feedback(task, bad, golden['trace'])['accepted']
            bad = dict(golden['artifact'], source_rows=golden['artifact']['source_rows'][:-1])
            assert not env.feedback(task, bad, golden['trace'])['accepted']
            accepted_programs = []
            for actions in itertools.product(*(p.ACTION_IDS[s] for s in p.STAGES)):
                program = dict(zip(p.STAGES, actions))
                full = env.submit(task, program)
                micro = env.submit(p.micro_task(task), program)
                # In this deliberately finite vocabulary, the shared fixture
                # must separate actions exactly. This is NOT a claim that small
                # tests certify arbitrary programs.
                assert full['feedback']['passed'] == micro['feedback']['passed'], (seed, task.task_id, program)
                if full['feedback']['accepted']: accepted_programs.append(program)
                assert set(full['feedback']) == {'accepted', 'passed', 'diagnostics'}
                assert all('action' not in d and 'stage' not in d for d in full['feedback']['diagnostics'])
                program_count += 1
            assert accepted_programs == [expert]
            task_count += 1
        for method in p.METHODS:
            controller = p.Controller(method, corpus)
            assert len(controller.fragments) == 16
            original = controller.state_hash()
            results = p.frozen_batch(env, controller, tests[:8] + tests[-8:], 3, seed, 'integrity')
            assert controller.state_hash() == original
            assert all(r['small_trials'] <= 3 and r['full_submissions'] <= 2 for r in results)
            for result in results:
                if method not in ('pi_no_localization', 'trajectory_replay'):
                    assert all(len(e['scope']) == 1 for e in result['events'])
                if method == 'pi_no_trial': assert result['small_trials'] == 0
            for task in train[:8]:
                p.episode(controller, task, env.submit, 3, learn=True)
            if method == 'pi_no_incentive':
                assert controller.evidence == {} and controller.workflows == {}
        # Frozen evaluation does not reinforce skills; acquisition does.
        learner = p.Controller('pi_full', corpus)
        before = learner.state_hash()
        for task in train:
            p.episode(learner, task, env.submit, 3, learn=True)
        assert learner.state_hash() != before
        assert learner.evidence
    assert {f.name for f in fields(p.Task)} == {
        'task_id', 'group', 'tool_revision', 'request', 'roles', 'rows'}
    result = {'status': 'passed', 'generated_tasks_checked': task_count,
              'full_micro_program_comparisons': program_count,
              'seeds': p.SEEDS,
              'checks': ['independent evaluator and executor agreement',
                         'wrong aggregate/provenance rejected',
                         'unique identifying program for each task',
                         'micro fixture agrees with full assertions for every anonymous program',
                         'demo fragment effects inferred from actual state changes',
                         'held-out multi-goal compositions',
                         'disjoint task/example identifiers',
                         'same budgets and public feedback',
                         'feedback omits correct action and hidden fault stage',
                         'one-stage patch scope except explicit no-scope and trajectory controls',
                         'frozen evaluation hashes',
                         'no-incentive control retains no verified action evidence']}
    p.OUT.mkdir(parents=True, exist_ok=True)
    (p.OUT/'integrity_checks.json').write_text(json.dumps(result, indent=2), encoding='utf-8')
    print(json.dumps(result, indent=2))


if __name__ == '__main__':
    verify()
\end{lstlisting}
\clearpage
\subsection{Exploratory Online Adaptation}
\begin{lstlisting}
"""Exploratory online adaptation, declared after the primary frozen pilot.

Frozen unseen-revision evaluation cannot measure learning across new-revision
tasks. Here we allow 24 separate C acquisition tasks, then freeze again at
0/6/12/24 exposures. Same original seeds, K=3, operators and final C tasks.
No score-dependent change to the primary experiment.
"""
import itertools
import json
import random
import time
import run_correction as p


def adaptation_tasks(seed):
    requests = [dict(zip(p.STAGES, choices))
                for choices in itertools.product(*(p.EFFECTS[s] for s in p.STAGES))
                if sum(choice != p.DEFAULT_GOALS[s] for s, choice in zip(p.STAGES, choices)) >= 2]
    assert len(requests) == 12
    requests = requests * 2
    random.Random(seed + 5000).shuffle(requests)
    return [p.make_task(seed, f'adapt-C-{i:03}', 'online_C_acquisition', 'C', request)
            for i, request in enumerate(requests)]


def experiment():
    started = time.perf_counter()
    results, acquisition, archived_tasks = [], [], []
    methods = ['pi_full', 'pi_no_incentive', 'self_trial_only', 'compositional_repair']
    warmup_episodes = 0
    for seed in p.SEEDS:
        env = p.Environment(seed)
        corpus = p.demonstrations(env, seed)
        stream = adaptation_tasks(seed)
        tests = [task for task in p.evaluation_tasks(seed) if task.group == 'new_tool_revision']
        assert not {t.task_id for t in stream}.intersection(t.task_id for t in tests)
        archived_tasks.extend({'seed': seed, **p.asdict(task)} for task in stream)
        for method in methods:
            controller = p.Controller(method, corpus)
            for task in p.acquisition_tasks(seed):
                p.episode(controller, task, env.submit, budget=3, learn=True)
                warmup_episodes += 1
            previous = 0
            for checkpoint in (0, 6, 12, 24):
                for task in stream[previous:checkpoint]:
                    row = p.episode(controller, task, env.submit, budget=3, learn=True)
                    row.update(seed=seed, method=method, mode='online_C_acquisition', trial_budget=3)
                    acquisition.append(row)
                rows = p.frozen_batch(env, controller, tests, 3, seed, 'after_online_C')
                for row in rows: row['checkpoint'] = checkpoint
                results.extend(rows)
                previous = checkpoint
    p.write_csv(p.OUT/'online_adaptation_summary.csv', p.summarize(results, ['checkpoint', 'method', 'group']))
    p.write_csv(p.OUT/'online_adaptation_per_seed.csv', p.summarize(results, ['seed', 'checkpoint', 'method', 'group']))
    p.write_csv(p.OUT/'online_adaptation_acquisition.csv', p.summarize(acquisition, ['seed', 'method']))
    for filename, records in [('online_adaptation_episodes.jsonl', results),
                              ('online_adaptation_tasks.jsonl', archived_tasks),
                              ('online_adaptation_learning.jsonl', acquisition)]:
        with (p.OUT/filename).open('w', encoding='utf-8') as f:
            for row in records: f.write(json.dumps(row) + '\n')
    metrics = {'status': 'executed_exploratory_online_adaptation',
               'declared_after_primary_scores': True,
               'primary_protocol_unchanged': True,
               'new_C_acquisition_tasks_per_seed': 24, 'trial_budget': 3,
               'checkpoints': [0, 6, 12, 24], 'methods': methods,
               'warmup_episodes': warmup_episodes, 'adaptation_episodes': len(acquisition),
               'frozen_evaluation_episodes': len(results),
               'total_episodes': warmup_episodes + len(acquisition) + len(results),
               'elapsed_seconds': time.perf_counter() - started,
               'llm_calls': 0,
               'limitation': 'After C acquisition, this is retention in a learned C environment, not zero-shot transfer to an unseen tool revision.'}
    (p.OUT/'online_adaptation_metrics.json').write_text(json.dumps(metrics, indent=2), encoding='utf-8')
    print(json.dumps(metrics, indent=2))


if __name__ == '__main__':
    experiment()
\end{lstlisting}
\end{document}